\newtheorem{theorem}{Theorem}
\newtheorem{definition}{Definition}
\newcommand{\argmax}{\operatornamewithlimits{argmax}}
\newcommand{\argmin}{\operatornamewithlimits{argmin}}
\title{Distributionally Robust Graphical Models}
\author{
	Rizal Fathony, Ashkan Rezaei, Mohammad Ali Bashiri, Xinhua Zhang, Brian D. Ziebart \\
	Department of Computer Science,
	University of Illinois at Chicago\\
	Chicago, IL 60607 \\
	\texttt{\{rfatho2, arezae4, mbashi4, zhangx, bziebart\}@uic.edu} 
}
\begin{document}

\maketitle

\begin{abstract}
In many structured prediction problems, complex relationships between variables are compactly defined using graphical structures.
The most prevalent graphical prediction methods---probabilistic graphical models and large margin methods---have their own distinct strengths but also possess significant drawbacks. 
Conditional random fields (CRFs)  are Fisher consistent, but they do not 
permit integration of customized loss metrics into their learning process. 
Large-margin models, such as structured support vector machines (SSVMs), 
have the flexibility to incorporate customized loss metrics, but lack Fisher consistency guarantees. We present adversarial graphical models (AGM), a distributionally robust approach for constructing a predictor that performs robustly for a class of data distributions defined using a graphical structure.
Our approach enjoys both the flexibility of incorporating customized loss metrics into its design as well as the statistical guarantee of Fisher consistency.  
We present exact learning and prediction algorithms for AGM with time complexity similar to existing graphical models and show the practical benefits of our approach with experiments.
\end{abstract}

\section{Introduction}

Learning algorithms must consider  complex relationships between variables to provide useful predictions in many structured prediction problems.
These complex relationships are often represented using graphs to convey the independence assumptions being employed. For example, chain structures are used when modeling sequences like words and sentences \citep{manning1999foundations}, tree structures are popular for natural language processing tasks that involve prediction for entities in parse trees  \cite{cohn2005semantic,hatori2008word,sadeghian2016semantic}, and lattice structures are often used for modeling images \cite{nowozin2011structured}. 
The most prevalent methods for learning with graphical structure are probabilistic graphical models (e.g., conditional random fields (CRFs) \cite{lafferty2001conditional}) and large margin models (e.g., structured support vector machines (SSVMs) \cite{tsochantaridis2005large} and maximum margin Markov networks (M$^3$Ns) \cite{taskar2005learning}). Both types of models have unique advantages and disadvantages.
CRFs with sufficiently expressive feature representation are consistent estimators of the marginal probabilities of variables in cliques of the graph \cite{li2009markov}, but are oblivious to the evaluative loss metric during training. 
On the other hand, SSVMs directly incorporate the evaluative loss metric in the training optimization, but lack consistency guarantees 
for multiclass settings \cite{tewari2007consistency,liu2007fisher}.

To address these limitations, we propose adversarial graphical models (AGM), a distributionally robust framework for leveraging graphical structure among variables that provides both the flexibility to incorporate customized loss metrics during training as well as the statistical guarantee of Fisher consistency for a chosen loss metric. Our approach is based on a robust adversarial formulation \cite{topsoe1979information,grunwald2004game,asif2015adversarial} that seeks a predictor that minimizes a loss metric in the worst-case given the statistical summaries of the empirical distribution. We replace the empirical training data for evaluating our predictor with an adversary that is free to choose an evaluating distribution from the set of distributions that match the statistical summaries of empirical training data via moment matching constraints, as defined by a graphical structure. 

Our AGM framework accepts a variety of loss metrics. A notable example that  connects our framework to previous models is the logarithmic loss metric. The conditional random field (CRF) model \cite{lafferty2001conditional} can be viewed as the robust predictor that best minimizes the logarithmic loss metric in the worst case subject to  moment matching constraints. In this paper, we focus on a family of loss matrices that additively decomposes over each variable and is defined only based on the label values of the predictor and evaluator. 
For examples, the additive zero-one (the Hamming loss), ordinal regression (absolute), and cost sensitive metrics fall into this family of loss metrics. We propose efficient exact algorithms for learning and prediction for graphical structures with low treewidth. Finally, we experimentally demonstrate the benefits of our framework compared with the previous models on structured prediction tasks.

\section{Background and related works}

\subsection{Structured prediction, Fisher consistency, and graphical models}

The structured prediction task is to simultaneously predict correlated label variables ${\bf y} \in \boldsymbol{\mathcal{Y}}$---often given input variables ${\bf x} \in \boldsymbol{\mathcal{X}}$---to minimize a loss metric (e.g., $\text{loss} : \boldsymbol{\mathcal{Y}} \times \boldsymbol{\mathcal{Y}} \rightarrow \mathbb{R}$) 
with respect to the true label values $\tilde{\bf y}$. This is in contrast with classification methods that  predict one single variable $y$. 
Given a distribution over the multivariate labels, $
{P}(
{\bf y})$, {\bf Fisher consistency} is 
a desirable characteristic that requires a learning method to produce predictions $\hat{\bf y}$ that minimize the expected loss of this distribution, 
$\hat{\bf y}^* \in \argmin_{\hat{\bf y}} \mathbb{E}_{
{\bf Y}\sim \tilde{P}}[\text{loss}(\hat{\bf y},{
{\bf Y}})]$, under ideal learning conditions (i.e., trained from the true data distribution using a fully expressive feature representation).

To reduce the complexity of the mappings from $\boldsymbol{\mathcal{X}}$ to $\boldsymbol{\mathcal{Y}}$ being learned, independence assumptions and more restrictive representations are employed.
In probabilistic graphical models, such as Bayesian networks \cite{pearl1985bayesian} and random fields \cite{lafferty2001conditional}, these assumptions are represented using a graph over the variables.
For graphs with arbitrary structure, inference (i.e., computing posterior probabilities or maximal value assignments) requires exponential time in terms of the number of variables \cite{cooper1990computational}.
However, this run-time complexity reduces to be polynomial in terms of the number of predicted variables for graphs with low treewidth (e.g., chains, trees, cycles).

\subsection{Conditional random fields as robust multivariate log loss minimization}

Following ideas from robust Bayes decision theory \cite{topsoe1979information,grunwald2004game} and distributional robustness \cite{delage2010distributionally}, the conditional random field \cite{lafferty2001conditional} can be derived as a robust minimizer of the logarithmic loss subject to moment-matching constraints:
{\small
\begin{align}
    \min_{\hat{P}(\cdot|{\bf x})} \max_{\check{P}(\cdot|{\bf x})}
    \mathbb{E}_{\substack{\bf X \sim \tilde{P};\\ \check{\bf Y}|{\bf X}\sim \check{P}}}\left[-\!\log \hat{P}(\check{\bf Y}|{\bf X})\right] \text{such that: } \mathbb{E}_{\substack{\bf X \sim \tilde{P};\\
    \check{\bf Y}|{\bf X}\sim{\check{P}}}}\left[\Phi({\bf X},\check{\bf Y})\right]=\mathbb{E}_{{\bf X},{\bf Y}\sim\tilde{P}}\left[\Phi({\bf X},{\bf Y})\right], \label{eq:robust-logloss}
\end{align}}%
where $\Phi : \boldsymbol{\mathcal{X}} \times \boldsymbol{\mathcal{Y}} \rightarrow \mathbb{R}^k$ are feature functions that typically decompose additively over subsets of variables.
Under this perspective, the predictor $\hat{P}$ seeks the conditional distribution that minimizes log loss against an adversary $\check{P}$ seeking to choose an evaluation distribution that approximates training data statistics, 
while otherwise maximizing log loss.
As a result, the predictor is robust not only to the training sample $\tilde{P}$, but all distributions with matching moment statistics \cite{grunwald2004game}.

The saddle point for Eq. \eqref{eq:robust-logloss} is obtained by the parametric conditional distribution $\hat{P}_\theta({\bf y}|{\bf x})=\check{P}_\theta({\bf y}|{\bf x}) = e^{\theta \cdot \Phi({\bf x},{\bf y})} \big/ \sum_{{\bf y}' \in \boldsymbol{\mathcal{Y}}} e^{\theta \cdot \Phi({\bf x},{\bf y}')} $ with parameters $\theta$ chosen by maximizing the data likelihood:
$\argmax_\theta \mathbb{E}_{\bf X,\bf Y\sim \tilde{P}}\left[\log \hat{P}_\theta({\bf Y}|{\bf X}) \right]$.
The decomposition of the feature function into additive clique features,  $\Phi_i({\bf x}, {\bf y}) = \sum_{c \in \mathcal{C}_i} \phi_{c,i}({\bf x}_c,{\bf y}_c)$, can be represented graphically by connecting the variables within cliques with undirected edges.
Dynamic programming algorithms (e.g., junction tree) allow the exact likelihood to be computed in run time that is exponential in terms of the treewidth of the resulting graph \cite{cowell2006probabilistic}.

Predictions for a particular loss metric are then made using the Bayes optimal prediction for the estimated distribution: ${\bf y}^* = \argmin_{\bf y} \mathbb{E}_{\hat{\bf Y}|{\bf x} \sim \hat{P}_\theta}[\text{loss}({\bf y},\hat{\bf Y})]$. 
This two-stage prediction approach can create inefficiencies when learning from limited amounts of data since optimization may focus on accurately estimating probabilities in portions of the input space that have no impact on the decision boundaries of the Bayes optimal prediction.
Rather than separating the prediction task from the learning process, we incorporate the evaluation loss metric of interest into the robust minimization formulation of Eq. \eqref{eq:robust-logloss} in this work.

\subsection{Structured support vector machines}

Our approach is most similar to structured support vector machines (SSVMs) \cite{joachims2005support} and related maximum margin methods \cite{taskar2005learning}, which also directly incorporate the evaluation loss metric into the training process.
This is accomplished by minimizing a hinge loss convex surrogate: 
\begin{align}
    \text{hinge}_\theta(\tilde{\bf y}) = 
    \max_{{\bf y}} \text{loss}({\bf y},\tilde{\bf y}) + \theta \cdot \left(\Phi({\bf x}, \tilde{\bf y}) - \Phi({\bf x}, {\bf y}) \right),
\end{align}
where $\theta$ represents the model parameters, $\tilde{\bf y}$ is the ground truth label, and $\Phi({\bf x}, {\bf y})$ is a feature function that decomposes additively over subsets of variables.

Using a clique-based graphical representation of the potential function, and assuming the loss metric also additively decomposes into the same clique-based representation, SSVMs have a computational complexity similar to probabilistic graphical models. 
Specifically, finding the value assignment ${\bf y}$ that maximizes this loss-augmented potential can be accomplished using dynamic programming in run time that is exponential in the graph treewidth \cite{cowell2006probabilistic}. 

A key weakness of support vector machines in general is their lack of Fisher consistency; there are distributions for multiclass prediction tasks for which the SVM will not learn a Bayes optimal predictor, even when the models are given access to the true distribution and sufficiently expressive features, due to the disconnection between the Crammer-Singer hinge loss surrogate \cite{crammer2002algorithmic} and the evaluation loss metric (i.e., the 0-1 loss in this case) \cite{liu2007fisher}.
In practice, if the empirical data behaves similarly to those distributions (e.g., $P({\bf y}|{\bf x})$ have no majority ${\bf y}$ for a specific input ${\bf x}$), the inconsistent model may perform poorly. 
This inconsistency extends to the structured prediction setting except in limited special cases \cite{zhang2004statistical}.
We overcome these theoretical deficiencies in our approach by using an adversarial formulation that more closely aligns the training objective with the evaluation loss metric, while maintaining convexity.

\subsection{Other related works}

\textbf{Distributionally robust learning}. There has been a recent surge of interest in the machine learning community for developing distributonally robust learning algorithms. The proposed learning algorithms differ in the uncertainty sets used to provide robustness. Previous robust learning algorithms have been proposed under the F-divergence measures (which includes the popular KL-divergence and $\chi$-divergence) \citep{namkoong2016stochastic,namkoong2017variance,hashimoto2018fairness}, the Wasserstein metric uncertainty set \citep{shafieezadeh2015distributionally,esfahani2018data,chen2018robust}, and the moment matching uncertainty set \citep{delage2010distributionally,livni2012simple}.
Our robust adversarial learning approach differs from the previous approaches by focusing on the robustness in terms of the conditional distribution $P({\bf y}|{\bf x})$ instead of the joint distribution $P({\bf x},{\bf y})$. Our approach seeks a predictor that is robust to the worst-case conditional label probability under the moment matching constraints. We do not impose any robustness to the training examples ${\bf x}$.

\textbf{Robust adversarial formulation}. There have been some investigations on applying robust adversarial formulations for prediction to specific types of structured prediction problems (e.g., sequence tagging \cite{li2016adversarial}, and graph cuts \cite{behpour2018arc}). Our work differs from them in two key aspects: we provide a general framework for graphical structures and any additive loss metrics, as opposed to the specific structures and loss metrics (additive zero-one loss) previously considered; and we also propose a learning algorithm with polynomial run-time guarantees, in contrast with previously employed algorithms  that use double/single oracle constraint generation techniques \cite{mcmahan2003planning} lacking polynomial run-time guarantees.

\textbf{Consistent methods}. A notable research interest in consistent methods for structured prediction tasks has also been observed. \citet{ciliberto2016consistent} proposed a consistent regularization approach that maps the original structured prediction problem into a kernel Hilbert space and employs a multivariate regression on the Hilbert space. \citet{osokin2017structured} proposed a consistent quadratic surrogate for any structured prediction loss metric and provide a polynomial sample complexity analysis for the additive zero-one loss metric surrogate. Our work differs from these line of works in the focus on the structure. We focus on the graphical structures that model interaction between labels, whereas the previous works focus on the structure of the loss metric itself.

\section{Approach}

We propose adversarial graphical models (AGMs) to better align structured prediction with evaluation loss metrics in settings where the structured interaction between labels are represented in a graph. 

\subsection{Formulations}

We construct a predictor that best minimizes a loss metric for the worst-case evaluation distribution that (approximately) matches the statistical summaries of empirical training data. 
Our predictor is allowed to make a probabilistic prediction over all possible label assignments (denoted as $\hat{P}(\hat{\bf y}|{\bf x})$). However, instead of evaluating the prediction with empirical data (as commonly performed by empirical risk minimization formulations \citep{vapnik1998statistical}), the predictor is pitted against an adversary that also makes a probabilistic prediction (denoted as $\check{P}(\check{\bf y}|{\bf x})$). The adversary is constrained to select its conditional distributions to match the statistical summaries of the empirical training distribution (denoted as $\tilde{P}$) via  moment matching constraints on the features functions $\Phi$.

\begin{definition}
\label{def:adv}
The adversarial prediction method for structured prediction problems with graphical interaction between labels is:
\begin{align}
    & \min_{\hat{P}(\hat{{\bf y}}|{\bf x})}
     \max_{\check{P}(\check{{\bf y}}|{\bf x})}
    \mathbb{E}_{\substack{{\bf X}\sim \tilde{P};\\ \hat{{\bf Y}}|{\bf X}\sim\hat{P};\\
    \check{{\bf Y}}|{\bf X}\sim{\check{P}}}}\left[ \text{loss}(\hat{{\bf Y}},\check{{\bf Y}})\right]
    \, \text{such that: } 
    \mathbb{E}_{\substack{{\bf X}\sim \tilde{P};\\\check{{\bf Y}}|{\bf X}\sim \check{P}}}\left[\Phi({\bf X},\check{{\bf Y}}) \right] = \tilde{\Phi}, \label{eq:adv}
\end{align}
where the vector 
of feature moments, 
$\tilde{\Phi} = \mathbb{E}_{{\bf X},{\bf Y}\sim\tilde{P}}[\Phi({\bf X},{\bf Y})]$,
is measured from sample training data. 
The feature function $\Phi({\bf X},{\bf Y})$ contains 
 features that are additively decomposed over cliques in the graph, e.g. $\Phi({\bf x},{\bf y}) = \sum_c \phi({\bf x}, {\bf y}_c)$.
\end{definition}

This follows recent research in developing adversarial prediction for cost-sensitive classification \cite{asif2015adversarial} and multivariate performance metrics \cite{wang2015adversarial}, and, more generally, distributionally robust decision making under moment-matching constraints \cite{delage2010distributionally}.
In this paper, we focus on pairwise graphical structures where the interactions between labels are defined over the edges (and nodes) of the graph. We also restrict the loss metric to a family of metrics that additively decompose over each $y_i$ variable, i.e., $\text{loss}(\hat{{\bf y}},\check{{\bf y}}) = \sum_{i=1}^n \text{loss}(\hat{y}_i,\check{y}_i)$.
Directly solving the optimization in Eq. \eqref{eq:adv} is impractical for reasonably-sized problems since $P({\bf y}|{\bf x})$ grows exponentially with the number of predicted variables. Instead, we utilize the method of Lagrange multipliers and the marginal formulation of the distributions of predictor and adversary 
to formulate a simpler dual optimization problem as stated in Theorem \ref{thm:marginal}.

\begin{theorem}
\label{thm:marginal}
For the adversarial structured prediction with pairwise graphical structure and an additive loss metric, solving the optimization in Definition 1 is equivalent to solving the following expectation of maximin problems over the node and edge marginal distributions parameterized by Lagrange multipiers $\theta$:
\begin{align}
    & \min_{\theta_e, \theta_v} 
    \mathbb{E}_{{\bf X},{\bf Y}\sim \tilde{P}}
    \max_{\check{P}(\check{{\bf y}}|{\bf x})}
    \min_{\hat{P}(\hat{{\bf y}}|{\bf x})}
     \Big[ \textstyle\sum_{i}^n \textstyle\sum_{\hat{y}_i, \check{y}_i} \hat{P}(\hat{y}_i|{\bf x}) \check{P} (\check{y}_i|{\bf x}) \text{loss}(\hat{y}_i,\check{y}_i) \label{eq:adv-marginal} \\
    & \qquad \qquad + \textstyle\sum_{(i,j) \in E}  \textstyle\sum_{\check{y}_i, \check{y}_j} \check{P} (\check{y}_i,\check{y}_j|{\bf x}) \left[ \theta_e \cdot  \phi({\bf x}, \check{y}_i, \check{y}_j) \right] - \textstyle\sum_{(i,j) \in E} \theta_e \cdot \phi({\bf x}, {y}_i, {y}_j)  \notag \\
    & \qquad \qquad +  \textstyle\sum_i^n  \textstyle\sum_{\check{y}_i} \check{P} (\check{y}_i|{\bf x}) \, \left[ \theta_v \cdot \phi({\bf x}, \check{y}_i) \right] - \textstyle\sum_i^n \theta_v \cdot \phi({\bf x}, {y}_i)  \Big], \notag
\end{align}
where $\phi({\bf x}, {y}_i)$ is the node feature function for node $i$, $\phi({\bf x}, {y}_i, {y}_j)$ is the edge feature function for the edge connecting node $i$ and $j$, $E$ is the set of edges in the graphical structure, 
and $\theta_v$ and $\theta_e$ are the Lagrange dual variables for the moment matching constraints corresponding to the node and edge features, respectively. 
The optimization objective depends on the predictor’s probability prediction $\hat{P}({\bf \hat{y}}|{\bf x})$ only through its node marginal probabilities $\hat{P}({\hat{y}_i}|{\bf x})$. Similarly, the objective depends on the adversary's probabilistic prediction $\check{P}({\bf \check{y}}|{\bf x})$ only through its node and edge marginal probabilities, i.e., $\check{P} (\check{y}_i|{\bf x}) $, and $\check{P} (\check{y}_i,\check{y}_j|{\bf x})$.
\end{theorem}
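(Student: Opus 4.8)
The plan is to transform the constrained minimax problem of Definition~\ref{def:adv} through Lagrangian duality and a minimax interchange, and then to exploit the additive structure of both the loss and the feature functions to collapse the joint distributions onto their node and edge marginals.

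First, I would dualize the moment-matching constraint on the adversary. For each ${\bf x}$ the inner maximization $\max_{\check P}\mathbb{E}[\text{loss}]$ subject to $\mathbb{E}_{\check P}[\Phi]=\tilde\Phi$ is a finite linear program in $\check P$ over the probability simplex, and the empirical distribution $\tilde P$ is itself feasible, so strong LP duality applies with no duality gap. Introducing Lagrange multipliers $\theta$ (which split into $\theta_v$ and $\theta_e$ along the node/edge decomposition of $\Phi$), the inner problem becomes $\min_\theta\max_{\check P}\big[\mathbb{E}[\text{loss}]+\theta\cdot(\mathbb{E}_{\check P}[\Phi]-\tilde\Phi)\big]$, where the maximization is now confined only to the simplex. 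Because the outer minimization over $\hat P$ and this dualized minimization over $\theta$ both appear as minimizations, I would interchange them to pull $\min_\theta$ to the front; note that the aggregate constraint is now absorbed into a single global $\theta$, which is what will allow the remaining problem to separate across training examples.

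Next, I would swap the order of $\min_{\hat P}$ and $\max_{\check P}$. The objective is bilinear in the pair $(\hat P,\check P)$---linear in each argument when the other is held fixed---and both range over compact convex probability simplices on the finite label space, so von Neumann's minimax theorem (or Sion's) licenses the interchange and produces the $\max_{\check P}\min_{\hat P}$ maximin ordering in the statement. Since $\theta$ is the only variable shared across examples while the conditionals $\hat P(\cdot|{\bf x})$ and $\check P(\cdot|{\bf x})$ are chosen independently for each ${\bf x}$, the maximin then factors instance-by-instance and the expectation over ${\bf X},{\bf Y}\sim\tilde P$ moves outside, with the $\theta\cdot\tilde\Phi$ term becoming $\mathbb{E}_{\tilde P}[\theta\cdot\Phi({\bf X},{\bf Y})]$ evaluated at the true labels---exactly the per-example inner problem of Eq.~\eqref{eq:adv-marginal}.

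The final step is the marginalization. Because $\text{loss}(\hat{\bf y},\check{\bf y})=\sum_i\text{loss}(\hat y_i,\check y_i)$ and $\hat P,\check P$ are selected independently, the expected loss factors into $\sum_i\sum_{\hat y_i,\check y_i}\hat P(\hat y_i|{\bf x})\,\check P(\check y_i|{\bf x})\,\text{loss}(\hat y_i,\check y_i)$, which involves $\hat P$ and $\check P$ only through their node marginals. Likewise, the additive clique decomposition $\Phi=\sum_c\phi$ makes the Lagrangian penalty depend on $\check P$ only through its node marginals $\check P(\check y_i|{\bf x})$ and edge marginals $\check P(\check y_i,\check y_j|{\bf x})$, and on the fixed true labels through $\phi({\bf x},y_i)$ and $\phi({\bf x},y_i,y_j)$. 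Collecting these terms reproduces the displayed objective verbatim. I expect the main obstacle to be rigorously justifying the two exchanges---strong duality for the constrained adversary, which rests on feasibility of $\tilde P$, and the minimax interchange between $\hat P$ and $\check P$, which rests on bilinearity and compactness of the simplices; once these are in place the marginalization is a direct consequence of the additive assumptions and amounts only to bookkeeping.
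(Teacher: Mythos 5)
Your proposal is correct and reaches Eq.~\eqref{eq:adv-marginal} with the same essential ingredients as the paper---Lagrangian dualization of the moment-matching constraints, a minimax interchange justified by bilinearity and compactness of the probability simplices, factorization of the inner problem across training examples, and marginalization via the additive loss and clique features---but you permute the duality steps. The paper first swaps $\min_{\hat{P}}$ and $\max_{\check{P}}$ on the constrained problem (von Neumann), then introduces $\theta$ as a Lagrangian penalty, and then invokes Sion's minimax theorem to swap $\theta$ with $\check{P}$; you instead dualize the adversary's constrained inner maximization immediately (strong LP duality, with no gap because the empirical conditional is feasible and the program is bounded), merge the two minimizations over $\hat{P}$ and $\theta$, and then perform a single bilinear minimax swap over the now-unconstrained simplices. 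Your ordering buys a slightly leaner argument---one nontrivial min-max interchange rather than two, with the no-gap step justified by LP feasibility rather than a second minimax theorem---while the paper's ordering keeps the adversary's constraint set explicit until after the predictor/adversary swap; both are valid in this finite setting, and your final marginalization step matches the paper's bookkeeping exactly.
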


\begin{proof}[Proof sketch]
From Eq. \eqref{eq:adv} we use the method of Langrange multipliers to introduce dual variables $\theta_v$ and $\theta_e$ that represent the moment-matching constraints over node and edge features, respectively. Using the strong duality theorem for convex-concave saddle point problems \cite{von1945theory,sion1958general}, we swap the optimization order of $\theta$, ${\hat P}({\hat{\bf y}|{\bf x}})$, and ${\hat P}({\check{\bf y}|{\bf x}})$ as in Eq. \eqref{eq:adv-marginal}. Then, using the additive property of the loss metric and feature functions, the optimization over ${\hat P}({\hat{\bf y}|{\bf x}})$ and ${\hat P}({\check{\bf y}|{\bf x}})$ can be transformed into an optimization over their respective node and edge marginal distributions.\!\footnote{More detailed proofs for Theorem 1 and 2 are available in the supplementary material.}
\end{proof}

Note that the optimization in Eq. \eqref{eq:adv-marginal} over the node and edge marginal distributions resembles the optimization of CRFs \citep{sutton2012introduction}. In terms of computational complexity, this means that for a general graphical structure, the optimization above may be intractable. We focus on families of graphical structures in which the optimization is known to be tractable. In the next subsection, we begin with the case of tree-structured graphical models and then proceed with the case of graphical models with low treewidth. In both cases, we formulate the corresponding efficient learning algorithms.

\subsection{Optimization}

We first introduce our vector and matrix notations for AGM optimization. 
Without loss of generality, we assume the number of class labels $k$ to be the same for all predicted variables $y_i, \forall i \in \{1,\hdots,n\}$. Let ${\bf p}_i$ be a vector with length $k$, where its $a$-th element contains $\hat{P}(\hat{y}_i = a|{\bf x})$, and let ${\bf Q}_{i,j}$ be a $k$-by-$k$ matrix with its $(a,b)$-th cells store $\check{P}(\check{y}_i = a, \check{y}_j =b|{\bf x})$. We also use a vector and matrix notation to represent the ground truth label by letting ${\bf z}_i$ be a one-hot vector where its $a$-th element ${\bf z}_i^{(a)} = 1$ if $y_i = a$ or otherwise 0, and letting ${\bf Z}_{i,j}$ be a one-hot matrix where its $(a,b)$-th cell ${\bf Z}_{i,j}^{(a,b)} = 1$ if $y_i = a \wedge y_j = b$ or otherwise 0. For each node feature $\phi_l({\bf x}, y_i)$, we denote ${\bf w}_{i,l}$ as a length $k$ vector where its $a$-th element contains the value of $\phi_l({\bf x}, y_i = a)$. Similarly, for each edge feature $\phi_l({\bf x}, y_i, y_j)$, we denote ${\bf W}_{i,j,l}$ as a $k$-by-$k$ matrix where its $(a,b)$-th cell contains the value of $\phi_l({\bf x}, y_i = a, y_j = b)$.
For a pairwise graphical model with tree structure, we rewrite Eq. \eqref{eq:adv-marginal} using our vector and matrix notation with local marginal consistency constraints as follows:
\begin{align}
    \min_{\theta_e, \theta_v} 
    \mathbb{E}_{{\bf X}, {\bf Y}\sim \tilde{P}}
    \max_{{\bf Q} \in \Delta}
    \min_{{\bf p} \in \Delta}
    &  \sum_i^n \Big[ {\bf p}_i {\bf L}_i ( {\bf Q}^{\mathrm{T}}_{pt(i);i} {\bf 1} ) + \left< {\bf Q}_{pt(i);i} - {\bf Z}_{pt(i);i}, \textstyle\sum_l \theta_e^{(l)} {\bf W}_{pt(i);i;l} \right> \label{eq:adv-mat} \\
    & \qquad + ( {\bf Q}^{\mathrm{T}}_{pt(i);i} {\bf 1} - {\bf z}_i )^{\mathrm{T}} ( \textstyle\sum_l \theta_v^{(l)} {\bf w}_{i;l} ) \Big] \notag \\
    \text{subject to: } &  {\bf Q}^{\mathrm{T}}_{pt(pt(i));pt(i)} {\bf 1} = {\bf Q}_{pt(i);i} {\bf 1} , \,\, \forall i \in \{1,\hdots,n\} , \notag
\end{align}
where $pt(i)$ indicates the parent of node $i$ in the tree structure, ${\bf L}_i$ stores a loss matrix corresponding to the portion of the loss metric for node $i$, i.e., ${\bf L}_i^{(a, b)} = \text{loss}(\hat{y}_i =a,\check{y}_i = b)$, and $\left< \cdot, \cdot \right>$ denotes the Frobenius inner product between two matrices, i.e., $\left< A, B \right> = \sum_{i,j} A_{i,j} B_{i,j}$. Note that we also employ probability simplex constraints ($\Delta$)  to each ${\bf Q}_{pt(i);i}$ and ${\bf p}_i$.
Figure \ref{fig:tree} shows an example tree structure with its marginal probabilities and the matrix notation of the probabilities.

\begin{figure*}[t]
	\begin{minipage}{.5\linewidth}
		\centering
		\subfloat[]{\label{fig:pr}\includegraphics[width=0.95\linewidth]{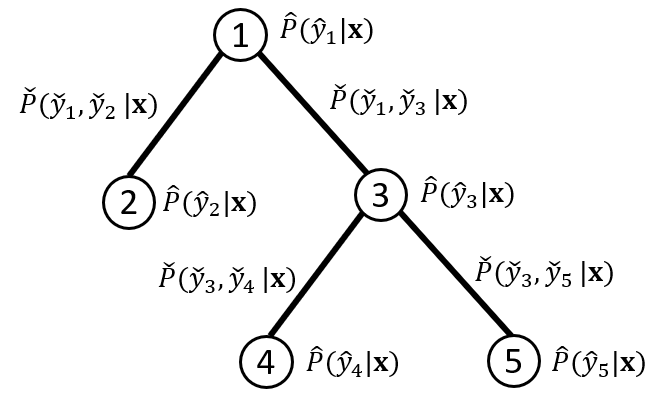}}
	\end{minipage}%
	\begin{minipage}{.5\linewidth}
		\centering
		\subfloat[]{\label{fig:pr-mat}\includegraphics[width=0.75\linewidth]{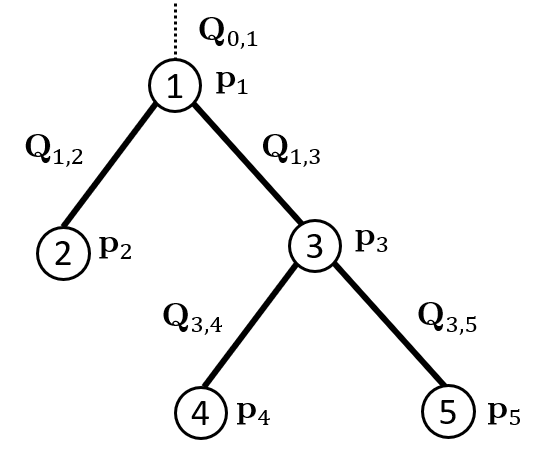}}
	\end{minipage}	
	\caption{
An example tree structure with five nodes and four edges with the corresponding marginal probabilities for predictor and adversary (a); and the matrix and vector notations of the probabilities (b). Note that we introduce a dummy variable ${\bf Q}_{0,1}$ to match the constraints in Eq. \eqref{eq:adv-mat}.
		}
	\label{fig:tree}
\end{figure*}

\subsubsection{Learning algorithm}

We first focus on solving the inner minimax optimization of Eq. \eqref{eq:adv-mat}. To simplify our notation, we denote the edge potentials ${\bf B}_{pt(i);i} = \sum_l \theta_e^{(l)} {\bf W}_{pt(i);i;l}$ and the node potentials $ {\bf b}_i = \sum_l \theta_v^{(l)} {\bf w}_{i;l}$. We then rewrite the inner optimization of Eq. \eqref{eq:adv-mat} as:
\begin{align}
    \max_{{\bf Q} \in \Delta}
    \min_{{\bf p} \in \Delta}
    &  \sum_i^n \Big[ {\bf p}_i {\bf L}_i ( {\bf Q}^{\mathrm{T}}_{pt(i);i} {\bf 1} ) + \left< {\bf Q}_{pt(i);i} , {\bf B}_{pt(i);i} \right>  + ( {\bf Q}^{\mathrm{T}}_{pt(i);i} {\bf 1} )^{\mathrm{T}} {\bf b}_i \Big] \label{eq:adv-inner} \\
    & \text{subject to: }  {\bf Q}^{\mathrm{T}}_{pt(pt(i));pt(i)} {\bf 1} = {\bf Q}_{pt(i);i} {\bf 1} , \,\, \forall i \in \{1,\hdots,n\}. \notag
\end{align}
To solve the optimization above, we use dual decomposition technique \cite{boyd2008notes,sontag2011introduction} that decompose the dual version of the optimization problem into several sub-problem that can be solved independently. By introducing the Lagrange variable ${\bf u}$ for the local marginal consistency constraint, we formulate an equivalent dual unconstrained optimization problem as shown in Theorem \ref{thm:dual-decomposition}.

\begin{theorem}
\label{thm:dual-decomposition}
The constrained optimization in Eq. \eqref{eq:adv-inner} is equivalent to an unconstrained Lagrange dual problem with an inner optimization that can be solved independently for each node as follows:
\begin{align}
& \min_{\bf u} 
    \sum_i^n \left[ \max_{{\bf Q}_i \in \Delta} \left< {\bf Q}_{pt(i);i} , {\bf B}_{pt(i);i} \!+\! {\bf 1} {\bf b}_i^{\mathrm{T}} \!-\! {\bf u}_i {\bf 1}^{\mathrm{T}} \!+\! \textstyle\sum_{k \in ch(i)} {\bf 1} {\bf u}_k^{\mathrm{T}}\right> +
    \min_{{\bf p}_i  \in \Delta} {\bf p}_i {\bf L}_i ( {\bf Q}^{\mathrm{T}}_{pt(i);i} {\bf 1} ) \right], \label{eq:adv-indep}
\end{align}
where ${\bf u}_i$ is the Lagrange dual variable associated with the marginal constraint of ${\bf Q}^{\mathrm{T}}_{pt(pt(i));pt(i)} {\bf 1} = {\bf Q}_{pt(i);i} {\bf 1}$, and $ch(i)$ represent the children of node $i$.
\end{theorem}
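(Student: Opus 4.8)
The plan is to remove the local marginal consistency (coupling) constraints in Eq.~\eqref{eq:adv-inner} by Lagrangian duality and then exploit the fact that, once this coupling is gone, the objective is separable over the edges of the tree. Write the inner minimax objective of Eq.~\eqref{eq:adv-inner} as $f({\bf Q},{\bf p})$ and the $i$-th constraint as $g_i({\bf Q}) = {\bf Q}^{\mathrm{T}}_{pt(pt(i));pt(i)}{\bf 1} - {\bf Q}_{pt(i);i}{\bf 1}$, so the feasible set is $\mathcal{F} = \{{\bf Q}\in\Delta : g_i({\bf Q})={\bf 0},\ \forall i\}$ (with the dummy root edge ${\bf Q}_{0;1}$ of Figure~\ref{fig:tree} making $pt(pt(\cdot))$ well defined at the top of the tree). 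Since the constraints involve only the maximization variable ${\bf Q}$, I first absorb the inner minimization into $\varphi({\bf Q}) = \min_{{\bf p}\in\Delta} f({\bf Q},{\bf p})$. Each term of $f$ is affine in ${\bf Q}$ for fixed ${\bf p}$, so $\varphi$ is a pointwise minimum of affine functions and hence concave in ${\bf Q}$; the problem becomes the concave maximization $\max_{{\bf Q}\in\mathcal{F}}\varphi({\bf Q})$ subject to affine equality constraints over the compact convex product-of-simplices $\Delta$.

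The second step is strong Lagrangian duality for this concave program. Because the $g_i$ are affine and $\mathcal{F}$ is nonempty (the true node/edge marginals are feasible), the linear constraint qualification holds and there is no duality gap, giving
\begin{align}
\max_{{\bf Q}\in\mathcal{F}}\varphi({\bf Q}) = \min_{\bf u}\ \max_{{\bf Q}\in\Delta}\Big[\varphi({\bf Q}) + \textstyle\sum_i {\bf u}_i^{\mathrm{T}} g_i({\bf Q})\Big], \notag
\end{align}
where ${\bf u}_i$ is the multiplier for $g_i$; the dual function is a maximum of affine-in-${\bf u}$ functions and is therefore convex in ${\bf u}$, so the outer $\min_{\bf u}$ is well posed. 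Since $\sum_i {\bf u}_i^{\mathrm{T}} g_i({\bf Q})$ does not depend on ${\bf p}$, re-expanding $\varphi$ and pushing this term inside the inner minimization recovers the nested form $\min_{\bf u}\max_{{\bf Q}\in\Delta}\min_{{\bf p}\in\Delta} L({\bf Q},{\bf p},{\bf u})$ with $L = f + \sum_i {\bf u}_i^{\mathrm{T}} g_i$.

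The third step is the separation. Relaxing the $g_i$ has eliminated the only terms coupling distinct edges, so I collect, for each edge matrix ${\bf Q}_{pt(i);i}$, every term of $L$ in which it appears: the edge/node potentials contribute $\langle {\bf B}_{pt(i);i} + {\bf 1}{\bf b}_i^{\mathrm{T}}, {\bf Q}_{pt(i);i}\rangle$; its own constraint $g_i$ contributes the factor $-{\bf Q}_{pt(i);i}{\bf 1}$; and the constraint $g_k$ of each child $k\in ch(i)$ contributes the grandparent factor ${\bf Q}^{\mathrm{T}}_{pt(i);i}{\bf 1}$ (since $pt(k)=i$ and $pt(pt(k))=pt(i)$). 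Using the identities ${\bf u}^{\mathrm{T}}{\bf M}{\bf 1} = \langle {\bf u}{\bf 1}^{\mathrm{T}}, {\bf M}\rangle$ and ${\bf u}^{\mathrm{T}}{\bf M}^{\mathrm{T}}{\bf 1} = \langle {\bf 1}{\bf u}^{\mathrm{T}}, {\bf M}\rangle$, these combine into $\langle {\bf B}_{pt(i);i} + {\bf 1}{\bf b}_i^{\mathrm{T}} - {\bf u}_i{\bf 1}^{\mathrm{T}} + \sum_{k\in ch(i)}{\bf 1}{\bf u}_k^{\mathrm{T}},\, {\bf Q}_{pt(i);i}\rangle$, the claimed per-node linear term. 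As ${\bf Q}_{pt(i);i}$ and ${\bf p}_i$ now appear only in the $i$-th summand, and $\Delta$ is a product of per-edge and per-node simplices, the global $\max_{\bf Q}\min_{\bf p}$ factors into a sum of independent per-node maximin problems (with $\min_{{\bf p}_i}$ understood as nested inside $\max_{{\bf Q}_i}$, since the loss term still depends on ${\bf Q}^{\mathrm{T}}_{pt(i);i}{\bf 1}$), which is exactly Eq.~\eqref{eq:adv-indep}.

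I expect the main obstacle to be the careful justification of the interchange together with the sign-and-transpose bookkeeping, rather than any deep estimate. In particular, I must argue strong duality legitimately pulls $\min_{\bf u}$ outside $\max_{\bf Q}$ by treating the relaxed problem as a concave program with affine constraints, because a direct appeal to Sion's theorem fails here: after the inner $\min_{\bf p}$, the objective is concave---not convex---in ${\bf u}$. The residual risk is purely in the matrix algebra, namely assigning the row-versus-column outer products (${\bf u}_i{\bf 1}^{\mathrm{T}}$ from node $i$'s own constraint versus $\sum_{k}{\bf 1}{\bf u}_k^{\mathrm{T}}$ from its children) correctly, and handling the dummy root so that the parent and grandparent indices are defined for the topmost nodes.
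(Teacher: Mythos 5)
Your proposal is correct and follows essentially the same route as the paper's proof: dualize the local marginal consistency constraints with multipliers ${\bf u}_i$, exchange $\min_{\bf u}$ with $\max_{\bf Q}$, rewrite the bilinear terms as Frobenius inner products with the outer products ${\bf u}_i{\bf 1}^{\mathrm{T}}$ and ${\bf 1}{\bf u}_k^{\mathrm{T}}$, and regroup by parent--child relations (including the dummy root edge) to obtain the independent per-node problems. The only substantive difference is your justification of the exchange via strong duality for a concave program with affine constraints; note, however, that your side remark that Sion's theorem is inapplicable is mistaken---the multiplier term $\sum_i {\bf u}_i^{\mathrm{T}} g_i({\bf Q})$ does not involve ${\bf p}$, so after the inner $\min_{\bf p}$ the objective is affine (hence convex) in ${\bf u}$ and concave in ${\bf Q}$, and the paper justifies the swap by exactly this Sion-type argument. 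Your constraint-qualification argument is a valid alternative, and your sign/transpose bookkeeping matches the paper's.
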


\begin{proof}[Proof sketch]
From Eq. \eqref{eq:adv-inner} we use the method of Langrange multipliers to introduce dual variables ${\bf u}$. The resulting dual optimization admits dual decomposability, where we can rearrange the variables into independent optimizations for each node as in Eq. \eqref{eq:adv-indep}.
\end{proof}

We denote  matrix ${\bf A}_{pt(i);i} \triangleq {\bf B}_{pt(i);i} \!+\! {\bf 1} {\bf b}_i^{\mathrm{T}} \!-\! {\bf u}_i {\bf 1}^{\mathrm{T}} \!+\! \textstyle\sum_{k \in ch(i)} {\bf 1} {\bf u}_k^{\mathrm{T}} $ to simplify the inner optimization in Eq. \eqref{eq:adv-indep}.
Let us define ${\bf r}_i \triangleq {\bf Q}^{\mathrm{T}}_{pt(i);i} {\bf 1}$ and ${\bf a}_i$ be the column wise maximum of matrix ${\bf A}_{pt(i);i}$, i.e., ${\bf a}_i^{(l)} = \max_l {\bf A}_{l;i}$. Given the value of ${\bf u}$, each of the inner optimizations in Eq. \eqref{eq:adv-indep} can be equivalently solved in terms of our newly defined variable changes ${\bf r}_i$ and ${\bf a}_i$ as follows:
\begin{align}
& \max_{{\bf r}_i  \in \Delta} \left[ {\bf a}_{i}^{\mathrm{T}} {\bf r}_i +
    \min_{{\bf p}_i  \in \Delta} {\bf p}_i {\bf L}_i {\bf r}_i \right] \label{eq:adv-ra}.
\end{align}
Note that this resembles the optimization in a standard adversarial multiclass classification problem \cite{asif2015adversarial} with ${\bf L}_i$ as the loss matrix and ${\bf a}_i$ as the class-based potential vector. For an arbitrary loss matrix, Eq. \eqref{eq:adv-ra} can be solved as a linear program. However, this is somewhat slow and more efficient algorithms have been studied in the case of zero-one and ordinal regression metrics \citep{fathony2016adversarial,fathony2017adversarial}. 
Given the solution of this inner optimization, we use a sub-gradient based optimization to find the optimal Lagrange dual variables ${\bf u}^*$.

To recover our original variables for the adversary's marginal distribution ${\bf Q}_{pt(i);i}^*$ given the optimal dual variables ${\bf u}^*$, we use the following steps. First, we use ${\bf u}^*$ and Eq. \eqref{eq:adv-ra} to compute the value of the node marginal probability ${\bf r}_i^*$. With the additional information that we know the value of ${\bf r}_i^*$ (i.e., the adversary's node probability), Eq. \eqref{eq:adv-inner} can be solved independently for each ${\bf Q}_{pt(i);i}$ to obtain the optimal ${\bf Q}_{pt(i);i}^*$ as follows:
\begin{align}
{\bf Q}_{pt(i);i}^* = \argmax_{{\bf Q}_{pt(i);i}  \in \Delta} \left< {\bf Q}_{pt(i);i} , {\bf B}_{pt(i);i} \right> \text{ subject to: } {\bf Q}_{pt(i);i}^{\mathrm{T}} {\bf 1} = {\bf r}_i^*, \, {\bf Q}_{pt(i);i} {\bf 1} = {\bf r}_{pt(i)}^*.
\end{align}
Note that the optimization above resembles an optimal transport problem over two discrete distributions \citep{villani2008optimal} with cost matrix $-{\bf B}_{pt(i);i}$. This optimal transport problem can be solved using a linear program solver or a more sophisticated solver (e.g., using Sinkhorn distances \citep{cuturi2013sinkhorn}).

For our overall learning algorithm, we use the optimal adversary's marginal distributions ${\bf Q}_{pt(i);i}^*$ to compute the sub-differential of the AGM formulation (Eq. \eqref{eq:adv-mat}) with respect to $\theta_v$ and $\theta_e$. The sub-differential for $\theta_v^{(l)}$ includes the expected node feature difference $\mathbb{E}_{{\bf X}, {\bf Y}\sim \tilde{P}} \sum_{i}^n ({\bf Q}_{pt(i);i}^{*\mathrm{T}} {\bf 1} - {\bf z}_i)^{\mathrm{T}} {\bf w}_{i;l}$, whereas the sub-differential for $\theta_e^{(l)}$ includes the expected edge feature difference $\mathbb{E}_{{\bf X}, {\bf Y}\sim \tilde{P}} \sum_{i}^n \left< {\bf Q}_{pt(i);i}^* - {\bf Z}_{pt(i);i}, {\bf W}_{pt(i);i;l} \right>$. Using this sub-differential information, we employ a stochastic sub-gradient based algorithm to obtain the optimal $\theta_v^*$ and $\theta_e^*$.

\subsubsection{Prediction algorithms}

We propose two different prediction schemes: probabilistic and non-probabilistic prediction.

\textbf{Probabilistic prediction}. 
Our probabilistic prediction is based on the predictor's label probability distribution in the adversarial prediction formulation. Given fixed values of $\theta_v$ and $\theta_e$, we solve a minimax optimization similar to Eq. \eqref{eq:adv-mat} by flipping the order of the predictor and adversary distribution as follows:
\begin{align}
    \min_{\bf p \in \Delta}
      \max_{\bf Q  \in \Delta} &
    \sum_i^n \left[ {\bf p}_i {\bf L}_i ( {\bf Q}^{\mathrm{T}}_{pt(i);i} {\bf 1} ) + \left< {\bf Q}_{pt(i);i}, \textstyle\sum_l \theta_e^{(l)} {\bf W}_{pt(i);i;l} \right>  + ( {\bf Q}^{\mathrm{T}}_{pt(i);i} {\bf 1})^{\mathrm{T}} ( \textstyle\sum_l \theta_v^{(l)} {\bf w}_{i;l} ) \right] \notag \label{eq:adv-pred} \\
    & \text{subject to: }  {\bf Q}^{\mathrm{T}}_{pt(pt(i));pt(i)} {\bf 1} = {\bf Q}_{pt(i);i} {\bf 1} , \,\, \forall i \in \{1,\hdots,n\}. 
\end{align}
To solve the inner maximization of ${\bf Q}$ we use a similar technique as in MAP inference for CRFs. We then use a projected gradient optimization technique to solve the outer minimization over ${\bf p}$ and a technique for projecting to the probability simplex \cite{duchi2008efficient}. 

\textbf{Non-probabilistic prediction}. 
Our non-probabilistic prediction scheme is similar to SSVM's prediction algorithm. In this scheme, we find $\hat{\bf y}$ that maximizes the potential value, i.e., $\hat{\bf y} = \argmax_{\bf y} f({\bf x},{\bf y}) $, where $f({\bf x},{\bf y}) = \theta^{\mathrm{T}} \Phi({\bf x},{\bf y})$.
This prediction scheme is faster than the probabilistic scheme since we only need a single run of a Viterbi-like algorithm for tree structures.

\subsubsection{Runtime analysis}

Each 
stochastic update in our algorithm involves finding the optimal ${\bf u}$ and recovering the optimal ${\bf Q}$ to be used in a sub-gradient update. Each iteration of a sub-gradient based optimization to solve ${\bf u}$ costs $\mathcal{O}(n \cdot c({\bf L}))$ time where $n$ is the number of nodes and $ c({\bf L})$ is the cost for solving the optimization in Eq. \eqref{eq:adv-ra} for the loss matrix ${\bf L}$. Recovering all of the adversary's marginal distributions ${\bf Q}_{pt(i);i}$ using a fast Sinkhorn distance solver has the empirical complexity of $\mathcal{O}(n k^2)$ where $k$ is the number of classes \citep{cuturi2013sinkhorn}. The total running time of our method depends on the loss metric we use. For example, if the loss metric is the additive zero-one loss, the total complexity of one stochastic gradient update is $\mathcal{O}(n l k \log k + n k^2)$ time, where $l$ is the number of iterations needed to obtain the optimal ${\bf u}$ and $\mathcal{O}(k \log k)$ time is the cost for solving Eq. \eqref{eq:adv-ra} for the zero-one loss \citep{fathony2016adversarial}. In practice, we find the average value of $l$ to be relatively small. This runtime complexity is competitive with the CRF, which requires $\mathcal{O}(n k^2)$ time to perform message-passing over a tree to compute the marginal distribution of each parameter update, and also with structured SVM where each iteration requires computing the most violated constraint, which also costs $\mathcal{O}(n k^2)$ time for running a Viterbi-like algorithm over a tree structure.

\subsubsection{Learning algorithm for graphical structure with low treewidth}

Our algorithm for tree-based graphs can be easily extended to the case of graphical structures with low treewidth. Similar to the case of the junction tree algorithm for probabilistic graphical models, we first construct a junction tree  representation for the graphical structure. We then solve a similar optimization as in Eq. \eqref{eq:adv-mat} on the junction tree. In this case, the time complexity of one stochastic gradient update of the algorithm is $\mathcal{O}(n l w k^{(w+1)} \log k + n k^{2(w+1)})$ time for the optimization with an additive zero-one loss metric, where $n$ is the number of cliques in the junction tree, $k$ is the number of classes, $l$ is the number of iterations in the inner optimization, and $w$ is the treewidth of the graph. This time complexity is competitive with the time complexities of CRF and SSVM which are also exponential in the treewidth of the graph.

\subsection{Fisher consistency analysis}

A key theoretical advantage of our approach over the structured SVM is that it provides Fisher consistency. This guarantees that
under the true distribution $P({\bf x},{\bf y})$, the learning algorithm yields a Bayes optimal prediction with respect to the loss metric \cite{tewari2007consistency, liu2007fisher}. In this setting, the learning algorithm is allowed to optimize over all measurable functions, or similarly, it has a feature representation of unlimited richness. We establish the Fisher consistency of our AGM approach in Theorem \ref{thm:consistency}.

\begin{theorem}
\label{thm:consistency}
The AGM approach is Fisher consistent for all additive loss metrics.
\end{theorem}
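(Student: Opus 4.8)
The plan is to work directly with the primal adversarial game of Definition~1 under the idealized Fisher-consistency conditions stated just before Theorem~\ref{thm:consistency}: the empirical distribution $\tilde{P}$ coincides with the true distribution $P$, and the feature map $\Phi$ is fully expressive (the learner ranges over all measurable functions). The observation I would exploit is that an additive loss, $\text{loss}(\hat{\bf y},\check{\bf y}) = \sum_{i=1}^n \text{loss}(\hat{y}_i,\check{y}_i)$, makes the game objective depend on both distributions \emph{only} through their node marginals, since the predictor and adversary draw labels independently given ${\bf x}$ and hence $\mathbb{E}[\text{loss}(\hat{\bf Y},\check{\bf Y})\mid {\bf x}] = \sum_i \sum_{a,b} \hat{P}(\hat{y}_i=a\mid{\bf x})\,\check{P}(\check{y}_i=b\mid{\bf x})\,\text{loss}(a,b)$. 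This is precisely the marginal reduction already granted by Theorem~\ref{thm:marginal}.

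First I would argue that full expressiveness of the node features clamps the adversary's node marginals to the truth. Because $\tilde{P}=P$, the true conditional is feasible for the moment-matching constraint of Eq.~\eqref{eq:adv}, so the feasible set is nonempty. With node features rich enough to represent every function of $({\bf x},y_i)$, the constraint $\mathbb{E}_{\check{P}}[\phi({\bf X},\check{Y}_i)] = \mathbb{E}_{P}[\phi({\bf X},Y_i)]$, enforced per input ${\bf x}$, forces $\check{P}(\check{y}_i\mid{\bf x}) = P(y_i\mid{\bf x})$ for almost every ${\bf x}$ and every $i$. Since the additive loss sees the adversary only through these node marginals, the inner $\max_{\check{P}}$ is vacuous on the objective: every feasible $\check{P}$ yields the same value.

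Next, with the adversary's node marginals fixed to $P(y_i\mid{\bf x})$, the outer minimization collapses to $\min_{\hat{P}}\sum_i \sum_{a,b}\hat{P}(\hat{y}_i=a\mid{\bf x})\,P(y_i=b\mid{\bf x})\,\text{loss}(a,b)$. Because this objective is linear in each node marginal $\hat{P}(\cdot\mid{\bf x})$ over the simplex, every minimizer places its mass on $\argmin_a \sum_b P(y_i=b\mid{\bf x})\,\text{loss}(a,b) = \argmin_a \mathbb{E}_{Y_i\sim P}[\text{loss}(a,Y_i)\mid{\bf x}]$, the per-node Bayes-optimal label. Additivity of the loss means the concatenation of these per-node labels is exactly the structured Bayes prediction $\hat{\bf y}^* \in \argmin_{\hat{\bf y}}\mathbb{E}_{{\bf Y}\sim P}[\text{loss}(\hat{\bf y},{\bf Y})\mid{\bf x}]$, which is Fisher consistency. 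Equivalently, this reduces the structured game to $n$ independent copies of the adversarial multiclass game of Eq.~\eqref{eq:adv-ra} with loss matrix ${\bf L}_i$, each already known to be Fisher consistent \citep{fathony2016adversarial}.

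I expect the main obstacle to be the expressiveness step, namely rigorously passing from ``fully expressive features'' to ``the adversary's node marginals equal the true node marginals at almost every ${\bf x}$'' rather than merely in aggregate over ${\bf X}\sim\tilde{P}$. The cleanest route is to assume the node feature family contains, for each class $a$, features of the form $\phi({\bf x},y_i)=\mathbb{I}[y_i=a]\,g({\bf x})$ with $g$ ranging over a separating class of test functions, so that moment matching is equivalent to matching conditional class probabilities pointwise in ${\bf x}$; the true conditional is then the unique feasible node-marginal profile. A secondary point, benign but worth noting, is non-uniqueness from ties in the Bayes-optimal set: Fisher consistency only requires the predictor's support to lie inside the Bayes-optimal label set, and linearity of the collapsed objective guarantees exactly that.
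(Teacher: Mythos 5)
Your proposal is correct and follows essentially the same route as the paper's own proof: fully expressive features pin the adversary's (node) marginals to those of the true distribution, and since the additive loss depends on the distributions only through these marginals (Theorem~\ref{thm:marginal}), the game collapses to minimizing the true expected loss, i.e., the Bayes optimal prediction. Your additional care about enforcing the moment constraints pointwise in ${\bf x}$ and handling ties simply makes explicit steps the paper leaves implicit.
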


\begin{proof}
As established in Theorem \ref{thm:marginal}, pairwise marginal probabilities are sufficient statistics of the adversary's distribution.
An unlimited access to arbitrary rich feature representation constrains the adversary's distribution in Eq. \eqref{eq:adv} to match the marginal probabilities of the true distribution, making the optimization in Eq. \eqref{eq:adv} equivalent to
$\min_{\hat{{\bf y}}}
 \mathbb{E}_{{\bf X},{\bf Y}\sim P} \left[ \text{loss}(\hat{{\bf y}},{\bf Y})\right]
$, which is the Bayes optimal prediction for the loss metric.
\end{proof}

\section{Experimental evaluation}

To evaluate our approach, we apply AGM to two different tasks: predicting emotion intensity from a sequence of images, and labeling entities in parse trees with semantic roles. We show the benefit of our method compared with a conditional random field (CRF) and a structured SVM (SSVM). 

\subsection{Facial emotion intensity prediction}

\begin{wraptable}[15]{R}{0.5\textwidth}
\vspace{-6mm}
\centering
\caption{The average loss metrics for the emotion intensity prediction. Bold numbers indicate the best or not significantly worse than the best results (Wilcoxon signed-rank test with $\alpha = 0.05$).}
\label{tbl:emotion}
\begin{tabular}{@{}lrrr@{}}
\toprule
Loss metrics          & AGM   & CRF   & SSVM  \\ \midrule
zero-one, unweighted  & 0.34 & \textbf{0.32} & 0.37 \\
absolute, unweighted & \textbf{0.33} & 0.34 & 0.40 \\
squared, unweighted & \textbf{0.38} & \textbf{0.38} & 0.40 \\
zero-one, weighted   & \textbf{0.28} & 0.32 & 0.29 \\
absolute, weighted   & \textbf{0.29} & 0.36 & \textbf{0.29} \\
squared, weighted  & 0.36 & 0.40 & \textbf{0.33} \\ \midrule
average               & 0.33 & 0.35 & 0.35 \\
\# bold               & 4     & 2     & 2     \\ \bottomrule
\end{tabular}
\end{wraptable}

We evaluate our approach in the facial emotion intensity prediction task \cite{kim2010structured}. Given a sequence of facial images, the task is to predict the emotion intensity for each individual image. The emotion intensity labels are categorized into three ordinal categories: \texttt{neutral} < \texttt{increasing} < \texttt{apex}, reflecting the degree of intensity. The dataset contains 167 sequences collected from 100 subjects consisting of six types of basic emotions (anger, disgust, fear, happiness, sadness, and surprise). 
In terms of the features used for prediction,
we follow an existing  feature extraction procedure \cite{kim2010structured} that uses Haar-like features and the PCA algorithm to reduce the feature dimensionality. 

In our experimental setup, we combine the data from all six different emotions and focus on predicting the ordinal category of emotion intensity. From the whole 167 sequences, we construct 20 different random splits of the training and the testing datasets with 120 sequences of training samples and 47 sequences of testing samples. We use the training set in the first split to perform cross validation to obtain the best regularization parameters and then use the best parameter in the evaluation phase for all 20 different splits of the dataset. 

In the evaluation, we use six different loss metrics. The first three metrics are the average of zero-one, absolute and squared loss metrics for each node in the graph (where we assign label values: \texttt{neutral} = 1, \texttt{increasing} = 2, and \texttt{apex} = 3). The other three metrics are the weighted version of the zero-one, absolute and squared loss metrics. These weighted variants of the loss metrics reflect the focus on the prediction task by emphasizing the prediction on particular nodes in the graph. In this experiment, we set the weight to be the position in the sequence so that we focus more on the latest nodes in the sequences. 

We compare our method with CRF and SSVM models. Both the AGM and the SSVM can incorporate the task's customized loss metrics in the learning process. The prediction for AGM and SSVM is done by taking an arg-max of potential values, i.e., $\argmax_{\bf y} f({\bf x},{\bf y}) = \theta \cdot \Phi({\bf x},{\bf y})$. For CRF, the training step aims to model the conditional probability $\hat{P}_{\theta}({\bf y}|{\bf x})$. The CRF's predictions are computed using the Bayes optimal prediction with respect to the loss metric and CRF's conditional probability, i.e., 
$\argmin_{\bf y} \mathbb{E}_{\hat{\bf Y}|{\bf x} \sim \hat{P}_\theta}[\text{loss}({\bf y},\hat{\bf Y})]$.

We report the loss metrics averaged over the dataset splits as shown in Table \ref{tbl:emotion}. We highlight the result that is either the best result or not significantly worse than the best result (using Wilcoxon signed-rank test with $\alpha = 0.05$). 
The result shows that our method significantly outperforms CRF in three cases (absolute, weighted zero-one, and weighted absolute losses), and statistically ties with CRF in one case (squared loss), while only being outperformed by CRF in one case (zero-one loss). AGM also outperforms SSVM in three cases (absolute, squared, and weighted zero-one losses), and statistically ties with SSVM in one case (weighted absolute loss), while only being outperformed by SSVM in one case (weighted squared loss). In the overall result, AGM maintains advantages compared to CRFs and SSVMs in both the overall average loss and the number of ``indistinguishably best'' performances on all cases. 
These results may reflect the theoretical benefit that AGM has over CRF and SSVM mentioned in Section 3 when learning from noisy labels.

\subsection{Semantic role labeling}

\begin{wrapfigure}{R}{.5\textwidth}
\vspace{-4mm}
    \centering
    \includegraphics[width=.5\textwidth]{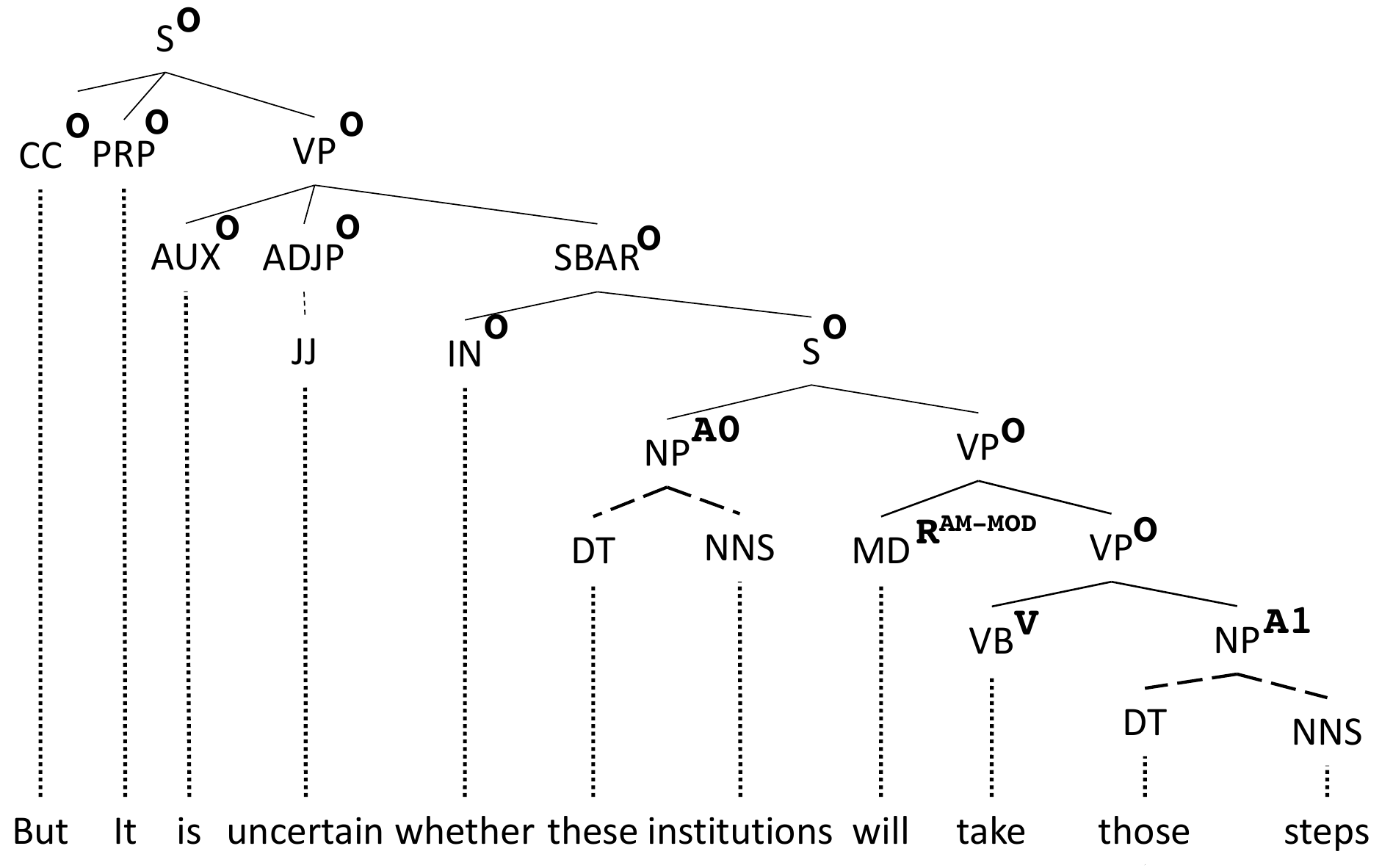}
    \caption{Example of a syntax tree with semantic role labels as bold superscripts. The dotted and dashed lines show the pruned edges from the tree. The original label \texttt{AM-MOD} is among class \texttt{R} in our experimental setup. }
    \vspace{-2mm}
    \label{fig:syntax_tree}
\end{wrapfigure}

We evaluate the performance of our algorithm on the semantic role labeling task for the CoNLL 2005 dataset~\cite{conll2005}. Given a sentence and its syntactic parse tree as the input, the task is to recognize the semantic role of each constituent in the sentence as propositions expressed by some target verbs in the sentence. There are a total of 36 semantic roles grouped by their types of: numbered arguments, adjuncts, references of numbered and adjunct arguments, continuation of each class type and the verb. We prune the syntactic trees according to \citet{xue2004}, i.e., we only include siblings of the nodes which are on the path from the verb (V) to the root and also the immediate children in case that the node is a propositional phrase (PP). Following the setup used by \citet{Cohn2005}, we extract the same syntactic and contextual features and label non-argument constituents and children nodes of arguments as "outside" (\texttt{O}). 
Additionally, 
in our experiment we simplify the prediction task by reducing the number of labels.
Specifically, we choose the three most common labels in the WSJ test dataset, i.e., \texttt{A0,A1,A2} and their references \texttt{R-A0,R-A1,R-A2}, and we combine the rest of the classes as one separate class \texttt{R}. Thus, together with outside \texttt{O} and verb \texttt{V}, we have a total of nine classes in our experiment.

\begin{wraptable}[7]{R}{0.5\textwidth}
\vspace{-3mm}
\centering
\caption{The average loss metrics for the semantic role labeling task. }
\label{tbl:semantic}
\begin{tabular}{@{}lrrr@{}}
\toprule
Loss metrics          & AGM   & CRF   & SSVM  \\ \midrule
cost-sensitive loss  & 0.14 & 0.19 & 0.14 \\ \bottomrule
\end{tabular}
\end{wraptable}

In the evaluation, we use a cost-sensitive loss matrix that reflects the importance of each label. We use the same cost-sensitive loss matrix to evaluate the prediction of all nodes in the graph. The cost-sensitive loss matrix is constructed by picking a random order of the class label and assigning an ordinal loss based on the order of the labels. 
We compare the average cost-sensitive loss metric of our method with the CRF and the SSVM as shown in Table \ref{tbl:semantic}. As we can see from the table, our result is competitive with SSVM, while maintaining an advantage over the CRF. This experiment shows that incorporating customized losses into the training process of learning algorithms is important for some structured prediction tasks. Both the AGM and the SSVM are designed to align their learning algorithms with the customized loss metric, whereas CRF can only utilize the loss metric information in its prediction step.

\section{Conclusion}
In this paper, we introduced adversarial graphical models, a robust approach to structured prediction that possesses the main benefits of existing methods:
(1) it guarantees the same Fisher consistency possessed by CRFs 
\citep{lafferty2001conditional}; (2) it aligns the target loss metric with the learning objective, as in maximum margin methods \cite{joachims2005support,taskar2005learning}; and (3) its computational run time complexity is primarily shaped by the graph treewidth, which is similar to both graphical modeling approaches.
Our experimental results demonstrate the benefits of this approach on structured prediction tasks with low treewidth. 

For more complex graphical structures with high treewidth, our proposed algorithm may not be efficient. Similar to the case of CRFs and SSVMs, approximation algorithms may be needed to solve the optimization in AGM formulations for these structures. 
In future work, we plan to investigate the optimization techniques and applicable approximation algorithms for general graphical structures.

\textbf{Acknowledgement. }
This work was supported, in part, by the National Science Foundation under Grant No. 1652530, and by the Future of Life Institute (futureoflife.org) FLI-RFP-AI1 program.

\newpage

 { 
\bibliography{ms}
\bibliographystyle{unsrtnat}
 }

\newpage
\appendix

\begingroup
\allowdisplaybreaks

\section{Supplementary Materials}

\subsection{Proof of Theorem 1}

\begin{proof}[Proof of Theorem 1]
\begin{align}
    & \min_{\hat{P}(\hat{{\bf y}}|{\bf x})}
     \max_{\check{P}(\check{{\bf y}}|{\bf x})}
    \mathbb{E}_{{\bf X}\sim \tilde{P};\hat{{\bf Y}}|{\bf X}\sim\hat{P};
    \check{{\bf Y}}|{\bf X}\sim{\check{P}}}\left[ \text{loss}(\hat{{\bf Y}},\check{{\bf Y}})\right] \\
    & \qquad \quad \text{subject to: } 
    \mathbb{E}_{{\bf X}\sim \tilde{P};\check{{\bf Y}}|{\bf X}\sim \check{P}}\left[\Phi({\bf X},\check{{\bf Y}}) \right] = \mathbb{E}_{{\bf X},{\bf Y} \sim \tilde{P}}\left[\Phi({\bf X},{\bf Y}) \right] \notag \\
    \overset{(a)}{=}&
     \max_{\check{P}(\check{{\bf y}}|{\bf x})}
     \min_{\hat{P}(\hat{{\bf y}}|{\bf x})}
    \mathbb{E}_{{\bf X}\sim \tilde{P};\hat{{\bf Y}}|{\bf X}\sim\hat{P};
    \check{{\bf Y}}|{\bf X}\sim{\check{P}}}\left[ \text{loss}(\hat{{\bf Y}},\check{{\bf Y}})\right] \\
    & \qquad \quad \text{subject to: } 
    \mathbb{E}_{{\bf X}\sim \tilde{P};\check{{\bf Y}}|{\bf X}\sim \check{P}}\left[\Phi({\bf X},\check{{\bf Y}}) \right] = \mathbb{E}_{{\bf X},{\bf Y} \sim \tilde{P}}\left[\Phi({\bf X},{\bf Y}) \right] \notag \\
    \overset{(b)}{=}&
     \max_{\check{P}(\check{{\bf y}}|{\bf x})}
     \min_{\theta}
     \min_{\hat{P}(\hat{{\bf y}}|{\bf x})}
    \mathbb{E}_{{\bf X},{\bf Y}\sim \tilde{P};\hat{{\bf Y}}|{\bf X}\sim\hat{P};
    \check{{\bf Y}}|{\bf X}\sim{\check{P}}}\left[ \text{loss}(\hat{{\bf Y}},\check{{\bf Y}})
    + \theta^{\mathrm{T}} \left( \Phi({\bf X},\check{{\bf Y}}) - \Phi({\bf X},{\bf Y}) \right)
    \right] \\
    \overset{(c)}{=}&
     \min_{\theta}
     \max_{\check{P}(\check{{\bf y}}|{\bf x})}
     \min_{\hat{P}(\hat{{\bf y}}|{\bf x})}
    \mathbb{E}_{{\bf X},{\bf Y}\sim \tilde{P};\hat{{\bf Y}}|{\bf X}\sim\hat{P};
    \check{{\bf Y}}|{\bf X}\sim{\check{P}}}\left[ \text{loss}(\hat{{\bf Y}},\check{{\bf Y}})
    + \theta^{\mathrm{T}} \left( \Phi({\bf X},\check{{\bf Y}}) - \Phi({\bf X},{\bf Y}) \right)
    \right] \\
    \overset{(d)}{=}&
     \min_{\theta}
     \mathbb{E}_{{\bf X},{\bf Y}\sim \tilde{P}}
     \max_{\check{P}(\check{{\bf y}}|{\bf x})}
     \min_{\hat{P}(\hat{{\bf y}}|{\bf x})}
    \mathbb{E}_{\hat{{\bf Y}}|{\bf X}\sim\hat{P};
    \check{{\bf Y}}|{\bf X}\sim{\check{P}}}\left[ \text{loss}(\hat{{\bf Y}},\check{{\bf Y}})
    + \theta^{\mathrm{T}} \left( \Phi({\bf X},\check{{\bf Y}}) - \Phi({\bf X},{\bf Y}) \right)
    \right] \\
\overset{(e)}{=}& \min_{\theta_e, \theta_v} 
    \mathbb{E}_{{\bf X},{\bf Y}\sim \tilde{P}}
    \max_{\check{P}(\check{{\bf y}}|{\bf x})}
    \min_{\hat{P}(\hat{{\bf y}}|{\bf x})}
     \mathbb{E}_{\hat{{\bf Y}}|{\bf X}\sim\hat{P};
    \check{{\bf Y}}|{\bf X}\sim{\check{P}}} \Big[ \textstyle\sum_i^n \text{loss}(\hat{Y}_i,\check{Y}_i) \\
    & \qquad + \theta_e \cdot \textstyle\sum_{(i,j) \in E} \left[ \phi({\bf X}, \check{Y}_i, \check{Y}_j) - \phi({\bf X}, {Y}_i, {Y}_j) \right] + \theta_v \cdot \textstyle\sum_i^n \left[ \phi({\bf X}, \check{Y}_i) - \phi({\bf X}, {Y}_i) \right] \Big] \notag \\
\overset{(f)}{=}& \min_{\theta_e, \theta_v} 
    \mathbb{E}_{{\bf X},{\bf Y}\sim \tilde{P}}
    \max_{\check{P}(\check{{\bf y}}|{\bf x})}
    \min_{\hat{P}(\hat{{\bf y}}|{\bf x})}
    \sum_{{\hat{\bf y},{\check{\bf y}}}} \hat{P}(\hat{{\bf y}}|{\bf x}) \check{P}(\check{{\bf y}}|{\bf x})
     \Big[ \textstyle\sum_i^n \text{loss}(\hat{y}_i,\check{y}_i) \\
    & \qquad + \theta_e \cdot \textstyle\sum_{(i,j) \in E} \left[ \phi({\bf x}, \check{y}_i, \check{y}_j) - \phi({\bf x}, {y}_i, {y}_j) \right] + \theta_v \cdot \textstyle\sum_i^n \left[ \phi({\bf x}, \check{y}_i) - \phi({\bf x}, {y}_i) \right] \Big] \notag \\
\overset{(g)}{=}& \min_{\theta_e, \theta_v} 
    \mathbb{E}_{{\bf X},{\bf Y}\sim \tilde{P}}
    \max_{\check{P}(\check{{\bf y}}|{\bf x})}
    \min_{\hat{P}(\hat{{\bf y}}|{\bf x})}
     \Big[ \textstyle\sum_{i}^n \textstyle\sum_{\hat{y}_i, \check{y}_i} \hat{P}(\hat{y}_i|{\bf x}) \check{P} (\check{y}_i|{\bf x}) \text{loss}(\hat{y}_i,\check{y}_i)  \\
    & \qquad \qquad + \textstyle\sum_{(i,j) \in E}  \textstyle\sum_{\check{y}_i, \check{y}_j} \check{P} (\check{y}_i,\check{y}_j|{\bf x}) \left[ \theta_e \cdot  \phi({\bf x}, \check{y}_i, \check{y}_j) \right] - \textstyle\sum_{(i,j) \in E} \theta_e \cdot \phi({\bf x}, {y}_i, {y}_j)  \notag \\
    & \qquad \qquad +  \textstyle\sum_i^n  \textstyle\sum_{\check{y}_i} \check{P} (\check{y}_i|{\bf x}) \, \left[ \theta_v \cdot \phi({\bf x}, \check{y}_i) \right] - \textstyle\sum_i^n \theta_v \cdot \phi({\bf x}, {y}_i)  \Big]. \notag
\end{align}

The transformation steps above are described as follows:
\begin{enumerate}[label=(\alph*)]
    \item We flip the min and max order using minimax duality \citep{von1945theory}. The domains of $\hat{P}( \hat{\bf  y}|{\bf x})$ and $\check{P}(\check{\bf y}|{\bf x})$ are both compact convex sets and the objective function is bilinear, therefore, strong duality holds.
    \item We introduce the Lagrange dual variable $\theta$ to directly incorporate the equality constraints into the objective function.
    \item The domain of $\check{P}(\check{\bf y}|{\bf x})$ is a compact convex subset of $\mathbb{R}^n$, while the domain of $\theta$ is $\mathbb{R}^m$. The objective is concave on $\check{P}(\check{\bf y}|{\bf x})$ for all $\theta$ (a non-negative linear combination of minimums of affine functions is concave), while it is convex on $\theta$ for all $\check{P}(\check{\bf y}|{\bf x})$. Based on Sion's minimax theorem \citep{sion1958general}, strong duality holds, and thus we can flip the optimization order of $\check{P}(\check{\bf y}|{\bf x})$ and $\theta$.
    \item Since the expression is additive in terms of $\check{P}(\check{\bf y}|{\bf x})$ and $\hat{P}(\hat{\bf y}|{\bf x})$, we can push the expectation over the empirical distribution ${\bf X},{Y} \sim \tilde{P}$ outside and independently optimize each $\check{P}(\check{\bf y}|{\bf x})$ and $\hat{P}(\hat{\bf y}|{\bf x})$.
    \item We apply our description of loss metrics which is additively decomposable into the loss for each node, and the features that can be decomposed into node and edge features. We also separate the notation for the Lagrange dual variable into the variable for the constraints on node features ($\theta_v$) and and the variable for the edge features ($\theta_e$).
    \item We rewrite the expectation over $\hat{P}(\hat{\bf y}|{\bf x})$ and $\check{P}(\check{\bf y}|{\bf x})$ in terms of the probability-weighted average.
    \item Based on the property of the loss metrics and feature functions, the sum over the exponentially many possibilities of $\hat{\bf y}$ and $\check{\bf y}$ can be simplified into the sum over individual nodes and edges values, resulting in the optimization over the node and edge marginal distributions.
\end{enumerate}
\end{proof}

\subsection{Proof of Theorem 2}

\begin{proof}[Proof of Theorem 2]
\begin{align}
    &\max_{{\bf Q} \in \Delta}
    \min_{{\bf p}  \in \Delta}
    \sum_i^n \Big[ {\bf p}_i {\bf L}_i ( {\bf Q}^{\mathrm{T}}_{pt(i);i} {\bf 1} ) + \left< {\bf Q}_{pt(i);i} , {\bf B}_{pt(i);i} \right>  + ( {\bf Q}^{\mathrm{T}}_{pt(i);i} {\bf 1} )^{\mathrm{T}} {\bf b}_i \Big]  \\
    & \qquad \qquad \text{subject to: }  {\bf Q}^{\mathrm{T}}_{pt(pt(i));pt(i)} {\bf 1} = {\bf Q}_{pt(i);i} {\bf 1} , \,\, \forall i \in \{1,\hdots,n\} \notag \\
    \overset{(a)}{=}&\max_{{\bf Q} \in \Delta}
  \min_{\bf u}
    \min_{{\bf p}  \in \Delta}
      \sum_i^n \Big[ {\bf p}_i {\bf L}_i ( {\bf Q}^{\mathrm{T}}_{pt(i);i} {\bf 1} ) + \left< {\bf Q}_{pt(i);i} , {\bf B}_{pt(i);i} \right>  + ( {\bf Q}^{\mathrm{T}}_{pt(i);i} {\bf 1} )^{\mathrm{T}} {\bf b}_i \Big] \\ 
& \qquad \qquad \qquad + \sum_i^n {\bf u}_i^{\mathrm{T}} \left( {\bf Q}^{\mathrm{T}}_{pt(pt(i));pt(i)} {\bf 1} - {\bf Q}_{pt(i);i} {\bf 1} \right)  \notag \\
  \overset{(b)}{=}&
  \min_{\bf u}
  \max_{{\bf Q} \in \Delta}
    \min_{{\bf p}  \in \Delta}
      \sum_i^n \Big[ {\bf p}_i {\bf L}_i ( {\bf Q}^{\mathrm{T}}_{pt(i);i} {\bf 1} ) + \left< {\bf Q}_{pt(i);i} , {\bf B}_{pt(i);i} \right>  + ( {\bf Q}^{\mathrm{T}}_{pt(i);i} {\bf 1} )^{\mathrm{T}} {\bf b}_i \Big] \\ 
& \qquad \qquad \qquad + \sum_i^n {\bf u}_i^{\mathrm{T}} \left( {\bf Q}^{\mathrm{T}}_{pt(pt(i));pt(i)} {\bf 1} - {\bf Q}_{pt(i);i} {\bf 1} \right)  \notag \\
\overset{(c)}{=}&\min_{\bf u}
    \max_{{\bf Q} \in \Delta}
    \min_{{\bf p}  \in \Delta}
      \sum_i^n \Big[ {\bf p}_i {\bf L}_i ( {\bf Q}^{\mathrm{T}}_{pt(i);i} {\bf 1} ) + \left< {\bf Q}_{pt(i);i} , {\bf B}_{pt(i);i} \right>  + \left< {\bf Q}_{pt(i);i} , {\bf 1} {\bf b}_i^{\mathrm{T}} \right> \Big] \\ 
& \qquad \qquad \qquad + \sum_i^n \Big[ \left< {\bf Q}_{pt(pt(i));pt(i)}, {\bf 1} {\bf u}_i^{\mathrm{T}} \right> - \left< {\bf Q}_{pt(i);i} , {\bf u}_i {\bf 1}^{\mathrm{T}}  \right> \Big] \notag \\
\overset{(d)}{=} & \min_{\bf u}
    \max_{{\bf Q} \in \Delta}
    \min_{{\bf p}  \in \Delta}
     \sum_i^n \left[ {\bf p}_i {\bf L}_i ( {\bf Q}^{\mathrm{T}}_{pt(i);i} {\bf 1} ) + \left< {\bf Q}_{pt(i);i} , {\bf B}_{pt(i);i} \!+\! {\bf 1} {\bf b}_i^{\mathrm{T}} \!-\! {\bf u}_i {\bf 1}^{\mathrm{T}} \!+\! \textstyle\sum_{k \in ch(i)} {\bf 1} {\bf u}_k^{\mathrm{T}} \right> \right].
\end{align}

The transformation steps above are described as follows:
\begin{enumerate}[label=(\alph*)]
\item We introduce the Lagrange dual variable ${\bf u}$, where ${\bf u}_i$ is the dual variable associated with the marginal constraint of ${\bf Q}^{\mathrm{T}}_{pt(pt(i));pt(i)} {\bf 1} = {\bf Q}_{pt(i);i} {\bf 1}$.
\item Similar to the analysis in Theorem 1, strong duality holds due to Sion's minimax theorem. Therefore we can flip the optimization order of ${\bf Q}$ and ${\bf u}$.
\item We rewrite the vector multiplication over ${\bf Q}_{pt(i);i} {\bf 1}$ or ${\bf Q}_{pt(i);i}^{\mathrm{T}} {\bf 1}$ with the corresponding Frobenius inner product notations.
\item We regroup the terms in the optimization above by considering the parent-child relations in the tree for each node. Note that $ch(i)$ represents the children of node $i$.
\end{enumerate}
\end{proof}

\endgroup

\end{document}